\documentclass[twoside,11pt]{article}

\usepackage{jmlr2e}
\usepackage{amsmath,amssymb,amsfonts}
\usepackage{subfig}
\usepackage{booktabs}
\usepackage{microtype}
\usepackage{url}
\usepackage{lastpage}

\jmlrheading{0}{2026}{1-\pageref{LastPage}}{1/26}{}{ }{Pal et al.}

\ShortHeadings{Blind-Spot Mass for Deployment Coverage}{Pal, Bhattacharya, and Singh}
\firstpageno{1}

\title{Blind-Spot Mass: A Good--Turing Framework for Quantifying Deployment Coverage Risk in Machine Learning Systems}

\author{\name Biplab Pal\thanks{Corresponding author.} \email bpal1@umbc.edu\\
       \addr CARDS (Center for Real-Time Distributed Sensing and Autonomy),\\ University of Maryland, Baltimore County (UMBC)\\
       \AND
       \name Santanu Bhattacharya\\
       \addr Massachusetts Institute of Technology\\
       \AND
       \name Madanjit Singh\\
       \addr Ambient Scientific Inc.}

\editor{(to be assigned)}

\begin{document}

\maketitle

\begin{abstract}
Blind-spot mass is a Good--Turing framework for quantifying deployment coverage risk in machine learning. 
In modern ML systems, operational state distributions are often heavy-tailed, implying that a long tail of valid but rare states is structurally under-supported in finite training and evaluation data. 
This creates a form of ``coverage blindness'': models can appear accurate on standard test sets yet remain unreliable across large regions of the deployment state space.

We propose blind-spot mass $B_n(\tau)$, a deployment metric estimating the total probability mass assigned to states whose empirical support falls below a threshold $\tau$. 
$B_n(\tau)$ is computed using Good--Turing unseen-species estimation and yields a principled estimate of how much of the operational distribution lies in reliability-critical, under-supported regimes. 
We further derive a coverage-imposed accuracy ceiling, decomposing overall performance into supported and blind components and separating capacity limits from data limits.

We validate the framework in wearable human activity recognition (HAR) using wrist-worn inertial data. 
Under a deployment-refined operational abstraction, $\widehat{B}_n(\tau)$ indicates that $\approx 95\%$ of deployment probability mass sits below $\tau{=}5$, implying that most valid regimes are effectively ``unseen'' by training support. 
We then replicate the same analysis in the MIMIC-IV hospital database (275 admissions), where the blind-spot mass curve converges to the same $\approx 95\%$ at $\tau{=}5$ across clinical state abstractions. 
This replication across structurally independent domains --- differing in modality, feature space, label space, and application --- shows that blind-spot mass is a general ML methodology for quantifying combinatorial coverage risk, not an application-specific artifact.

Blind-spot decomposition identifies which activities or clinical regimes dominate risk, providing actionable guidance for industrial practitioners on targeted data collection, normalization/renormalization, and physics- or domain-informed constraints for safer deployment.
\end{abstract}

\begin{keywords}
deployment coverage, Good--Turing estimation, unseen mass, edge AI, human activity recognition
\end{keywords}

\section{Introduction}
The reliability of a machine learning system in deployment depends not only on its accuracy on a held-out test set, but on the relationship between its training distribution and the full range of states it will encounter in operation.
Real-world operational distributions are often heavy-tailed: most probability mass concentrates in common states, while a long tail of rare but valid states receives little or no representation in finite training datasets.
A model can therefore achieve high benchmark accuracy yet behave unreliably across a non-trivial fraction of its operational distribution---not necessarily because of distribution shift, but because those regions were structurally invisible during training.

\begin{figure}[t]
  \centering
  \includegraphics[width=0.95\linewidth]{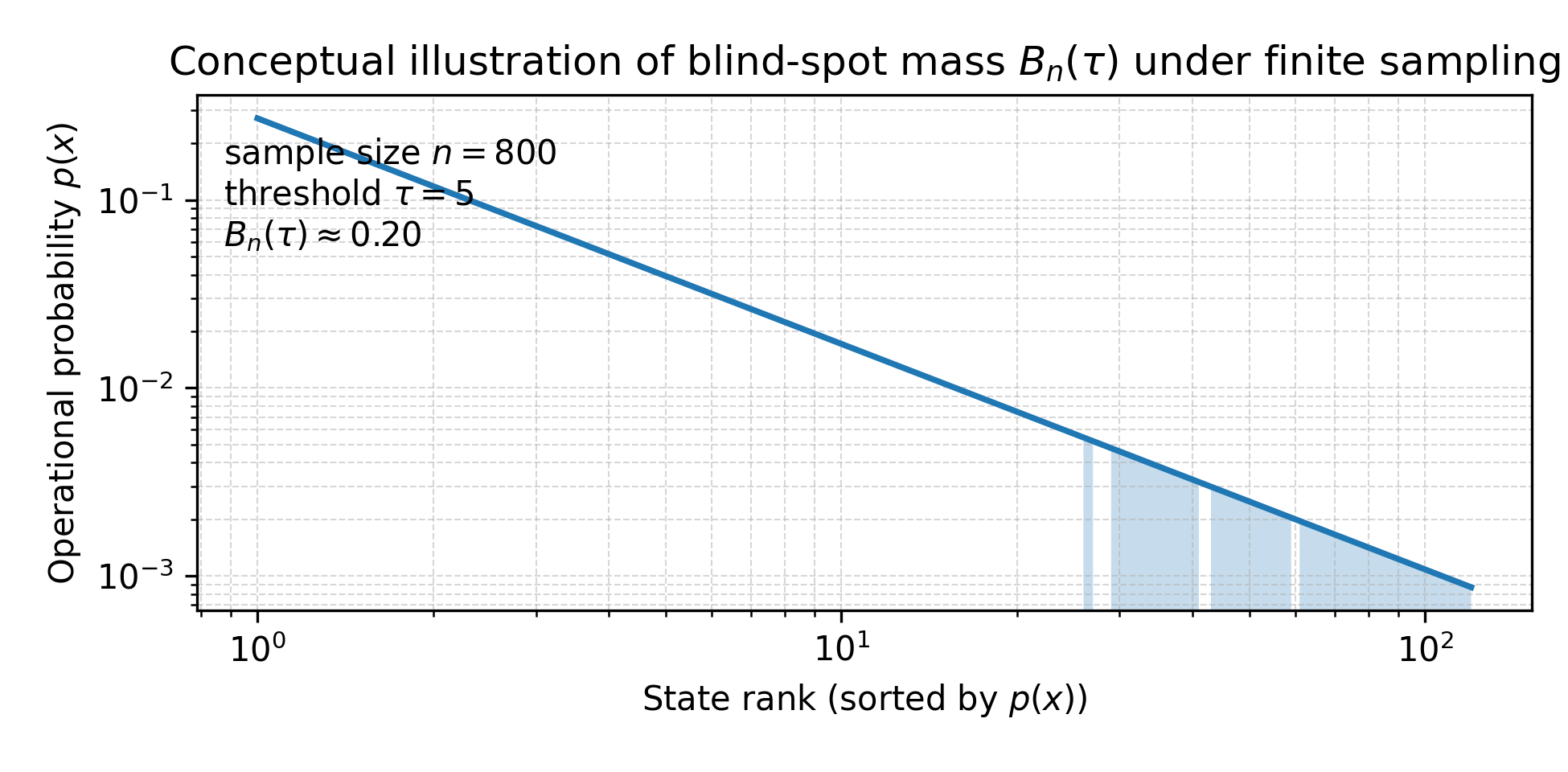}
  \caption{Conceptual illustration of blind-spot mass under finite sampling. States are ordered by operational probability $p(x)$ (heavy-tailed). A finite sample of size $n$ yields empirical supports $N_n(x)$; blind-spot mass $B_n(\tau)$ corresponds to the total probability of states whose support falls below a threshold $\tau$.}
  \label{fig:concept}
\end{figure}

Existing approaches address adjacent deployment questions but do not answer a fundamental coverage question: \emph{What fraction of the deployment distribution lies in regions where the model has insufficient data support to be reliable?}
Out-of-distribution (OOD) detection assigns anomaly scores to individual test samples at inference time; conformal prediction provides coverage guarantees under calibration assumptions; and shift detection monitors distributional changes during operation.
None of these methods directly quantifies the aggregate probability mass in under-supported regions \emph{before} deployment.

We bridge this gap by adapting Good--Turing unseen-event estimation to deployment coverage.
Good--Turing theory was developed to estimate the probability of encountering unseen categories from finite samples; in modern ML deployments, training and calibration data are likewise finite samples from an operational distribution whose tail is structurally underrepresented.
We formalize this intuition as \emph{blind-spot mass} $B_n(\tau)$, which estimates the probability mass of operational states whose empirical support is below a threshold $\tau$.

Although the framework is general, we focus our empirical validation on wearable human activity recognition (HAR), a domain where industrial deployments frequently face combinatorial variation (body size, gait style, placement/orientation drift, clothing, and environment) and where edge models must remain compact.
Our contributions are:
\begin{itemize}
\item We define blind-spot mass $B_n(\tau)$ as a thresholded Good--Turing coverage metric over an operational state space, and provide estimators based on frequency-of-frequencies.
\item We derive a coverage-imposed accuracy ceiling that decomposes total accuracy into supported and blind regions, isolating limits due to data support from limits due to model capacity (Proposition~\ref{prop:ceiling}).
\item We instantiate the framework in wearable HAR and validate it on an in-house dataset and the PAMAP2 benchmark, showing that state refinement induces sharp increases in blind-spot mass and identifying dominant contributors via blindness decomposition (Figs.~\ref{fig:fig1}--\ref{fig:fig5} and Table~\ref{tab:risk_blindness}).
\end{itemize}
Sections~\ref{sec:framework}--\ref{sec:results} present the framework, validation setup, and empirical results; Section~\ref{sec:discussion} discusses deployment implications for industrial practice.
\section{Related Work}
\label{sec:related}
Blind-spot mass is a \emph{coverage quantification} tool that complements (rather than replaces) uncertainty estimation and shift monitoring.
We position the contribution relative to four adjacent literatures that are commonly used to reason about deployment risk.

\subsection{Out-of-distribution and novelty detection}
OOD detection methods aim to identify inputs that are unlikely under the training distribution, typically by assigning an anomaly score to each test sample at inference time.
Common baselines include maximum softmax probability and related confidence heuristics~\citep{hendrycks2017baseline}, while more recent approaches use energy scores~\citep{liu2020energy} or class-conditional feature-space distances such as Mahalanobis statistics~\citep{lee2018mahalanobis}.
These methods are valuable for \emph{flagging} anomalous samples, but they do not quantify how much of the \emph{deployment distribution} lies in low-support regimes under a specified support requirement.

The key distinction is the unit of analysis and timing.
While OOD detection methods assign anomaly scores to individual test samples at inference time, blind-spot mass operates at the level of the deployment distribution, quantifying the aggregate probability mass residing in under-supported regions \emph{prior} to deployment.
As a result, a system can exhibit low OOD rate yet still suffer severe long-tail under-support within the in-distribution manifold.

Beyond this conceptual distinction, many OOD benchmarks operationalize ``outliers'' via curated auxiliary datasets.
Such setups are valuable for stress testing but do not directly address the common industrial failure mode where rare \emph{in-distribution} states are valid yet under-sampled (for example, unusual sensor placements or transitional motion regimes).
Because blind-spot mass is computed from the empirical support structure of the training/calibration sample itself, it can surface long-tail under-support even when no explicit outlier dataset exists and even when per-sample anomaly detectors appear well calibrated on standard test splits.

\subsection{Conformal prediction and abstention}
Conformal prediction provides set-valued predictors with finite-sample marginal coverage guarantees under exchangeability, often using a held-out calibration set~\citep{angelopoulos2021conformal}.
In practice, conformal methods are frequently paired with selective prediction or abstention to trade coverage for risk.
However, conformal guarantees are conditional on the calibration set being sufficiently representative of the operational distribution.

Blind-spot mass targets a different question: whether the available data provide sufficient \emph{support} across the operational state space at a given reliability threshold.
In this sense, $B_n(\tau)$ is an upstream diagnostic that can inform whether a desired operating point is feasible without collecting additional targeted data or reducing the effective state space through normalization or physics-informed constraints.

This relationship makes the methods complementary in practice.
For example, when $B_n(\tau)$ remains large at the support level required for a desired risk bound, conformal sets may achieve nominal marginal coverage while still concentrating errors in low-support regions, yielding poor utility for selective prediction.
Running $B_n(\tau)$ alongside conformal calibration provides a principled way to determine whether additional targeted calibration data or a coarser operational state abstraction is needed before conformal guarantees become operationally meaningful.

\subsection{Dataset shift detection and monitoring}
Shift detection and monitoring aim to determine whether deployment data deviate from training data, using two-sample tests or black-box indicators in learned feature spaces~\citep{rabanser2019failing}.
Such monitoring is essential for continuous evaluation once a system is deployed.
Yet shift tests are not designed to answer coverage sufficiency questions such as: what fraction of deployment lies below $\tau$-support, or which regions dominate that fraction?

Blind-spot mass focuses directly on the support structure of the operational state distribution and is therefore complementary to shift detection.
A practical workflow is to use $B_n(\tau)$ as a pre-deployment coverage diagnostic and to use shift monitoring to detect changes in the operational distribution post-deployment.

A further distinction is that shift detection fundamentally requires a sample of deployment observations, whereas blind-spot mass can be computed entirely pre-deployment.
Moreover, a system can suffer large blind-spot mass even when there is \emph{no} distribution shift in the usual sense (the operational distribution matches training in expectation but exhibits a heavy tail that is poorly covered at finite $n$).
In such cases, shift tests may correctly report ``no shift'' while deployment reliability is still dominated by under-supported states; $B_n(\tau)$ is designed to diagnose precisely this regime.

\subsection{Good--Turing estimation and unseen-event probability}
Blind-spot mass builds on classical work on unseen-event probability and species estimation.
Good--Turing estimation~\citep{good1953} and related smoothing methods~\citep{church1991} estimate the total probability of events not observed in a sample (the \emph{unseen mass}).
In ecology and biodiversity, unseen-species estimators such as Chao's lower bound~\citep{chao1984} quantify missing richness under finite sampling, and modern treatments study optimal estimation in large alphabets~\citep{orlitsky2016,efron1976}.

Our contribution is to generalize unseen-mass reasoning to a deployment-facing, thresholded notion $B_n(\tau)$ that captures reliability-relevant support requirements and to connect $B_n(\tau)$ to an explicit accuracy decomposition and actionable state-refinement analyses in edge sensing.

While unseen-event estimation has been widely studied in statistics and language modeling, it has been less explicitly connected to deployment risk assessment in machine learning systems.
A key step in this translation is defining an operational state abstraction---the analogue of ``species''---that captures reliability-relevant variation and can be refined (or constrained) to trade expressivity for support.
The thresholded form $B_n(\tau)$ then links classical unseen-mass reasoning to actionable engineering questions about minimum per-state support and abstention activation, rather than solely estimating the probability of entirely unseen categories.

\section{Blind-Spot Mass: Mathematical Framework}\label{sec:framework}
We formalize deployment coverage risk through an operational state space $\Omega$ and a thresholded notion of under-support.
The definitions are model-agnostic: they describe limitations imposed by data coverage rather than by a particular classifier.

\subsection{Operational state space and combinatorial growth}
Let $\Omega$ denote the operational state space; each $x\in\Omega$ represents a configuration of latent factors relevant to model behavior.
If $d$ condition factors are modeled as $\Omega_1,\ldots,\Omega_d$, then
\begin{equation}
\Omega = \prod_{i=1}^{d} \Omega_i, \qquad |\Omega| = \prod_{i=1}^{d} |\Omega_i|.
\label{eq:omega_factor}
\end{equation}
Even moderate $|\Omega_i|$ yields exponential growth, motivating abstractions and invariances.

\subsection{Effective state space}
Physical constraints, invariances, and modeling assumptions reduce the number of distinguishable behaviors.
We define an \emph{effective} state-space size
\begin{equation}
K_{\mathrm{eff}} \le |\Omega|.
\label{eq:keff}
\end{equation}
In wearables, $K_{\mathrm{eff}}$ can be reduced by renormalization (e.g., orientation handling, gravity separation, cadence normalization) and by physics-informed structure.

\subsection{Blind-spot mass (coverage blindness)}
Let $X_1,\ldots,X_n\sim P$ be i.i.d.\ samples over $\Omega$ with counts $N_n(x) = \sum_{i=1}^n \mathbf{1}\{X_i=x\}$.
For a support threshold $\tau\ge 1$, define the blind-spot mass
\begin{equation}
B_n(\tau)=\sum_{x\in\Omega} P(x)\,\mathbf{1}\{N_n(x)<\tau\}.
\label{eq:bn}
\end{equation}
$B_n(\tau)$ quantifies the fraction of deployment probability mass that is under-supported at level $\tau$, independent of model class.
Coarsening the state space (merging states) can only increase support counts and therefore yields a \emph{lower bound} on blindness for the true (finer) operational state space.

\subsection{Risk-weighted blindness}
To encode heterogeneous consequences, define a risk function $w:\Omega\to\mathbb{R}_+$ and the risk-weighted blind-spot mass
\begin{equation}
B_n^*(\tau)=\sum_{x\in\Omega} P(x)\,w(x)\,\mathbf{1}\{N_n(x)<\tau\}.
\label{eq:bnstar}
\end{equation}

\subsection{Unseen mass and Good--Turing}
For $\tau=1$, $B_n(1)$ is the unseen probability mass.
Under mild assumptions, Good--Turing estimation yields $\widehat{B}_n(1)\approx f_1/n$, where $f_1$ is the number of singleton states \citep{good1953,efron1976,orlitsky2016}:
\begin{equation}
\widehat{B}_n(1)\approx \frac{f_1}{n}.
\label{eq:gt}
\end{equation}

\subsection{Wearable operational state abstraction}
For wearable inertial HAR, we instantiate $x\in\Omega$ as a tuple of condition factors
\begin{equation}
x=(a,u,p,e,c),
\label{eq:wearable_state}
\end{equation}
where $a$ denotes the activity label; $u$ captures user attributes (e.g., body size, age band);
$p$ denotes sensor placement and orientation; $e$ denotes environment (e.g., floor compliance, incline);
and $c$ denotes interaction context (e.g., chair/table, transitions).
The true $\Omega$ is high-dimensional, so empirical analyses often use coarser partitions.

\subsection{Coverage-imposed accuracy ceiling}
Blind-spot mass induces a simple performance decomposition that separates supported and blind regions.

\begin{proposition}[Coverage-imposed ceiling]\label{prop:ceiling}
Let $S_\tau=\{x: N_n(x)\ge \tau\}$ be supported states and $U_\tau=\Omega\setminus S_\tau$ blind states.
For any predictor, the overall accuracy satisfies
\begin{equation}
\mathrm{Acc} = (1-B_n(\tau))\,\mathrm{Acc}_{\mathrm{sup}} + B_n(\tau)\,\mathrm{Acc}_{\mathrm{blind}},
\label{eq:mixture}
\end{equation}
where $\mathrm{Acc}_{\mathrm{sup}}=\mathbb{P}(\hat{Y}=Y\mid X\in S_\tau)$ and $\mathrm{Acc}_{\mathrm{blind}}=\mathbb{P}(\hat{Y}=Y\mid X\in U_\tau)$.
\end{proposition}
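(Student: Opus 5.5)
This identity is the law of total probability applied to the partition $\Omega = S_\tau \,\dot\cup\, U_\tau$. The only real care is in setting it up. I would fix the training sample $X_1,\ldots,X_n$, so that the counts $N_n(x)$, and hence the sets $S_\tau$ and $U_\tau$, are deterministic subsets of $\Omega$. I would then take $(X,Y)$ to be a fresh deployment draw, independent of the sample, with $X\sim P$. The predictor $\hat Y$ may depend on the training data and on $X$; this is why the statement holds for \emph{any} predictor.

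\textbf{Step 1: identify the weights.} Conditional on the sample, I would show that $\mathbb{P}(X\in U_\tau)=\sum_{x\in\Omega}P(x)\mathbf{1}\{N_n(x)<\tau\}=B_n(\tau)$, directly from \eqref{eq:bn}. Since $S_\tau$ and $U_\tau$ partition $\Omega$, it follows that $\mathbb{P}(X\in S_\tau)=1-B_n(\tau)$.

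\textbf{Step 2: decompose accuracy.} I would write
\[
\mathrm{Acc}=\mathbb{P}(\hat Y=Y)=\mathbb{P}(\hat Y=Y,\,X\in S_\tau)+\mathbb{P}(\hat Y=Y,\,X\in U_\tau).
\]
Each term factors as $\mathbb{P}(X\in\cdot)\,\mathbb{P}(\hat Y=Y\mid X\in\cdot)$. Substituting the weights from Step 1 gives \eqref{eq:mixture}.

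\textbf{Main obstacle: degenerate cases.} A conditional accuracy is undefined when its conditioning event has probability zero. If $B_n(\tau)=0$, then $\mathrm{Acc}_{\mathrm{blind}}$ is undefined, but its coefficient vanishes, so the identity holds under any convention, for instance defining $0\cdot(\text{undefined})=0$. The case $B_n(\tau)=1$ is handled symmetrically. I would state this convention explicitly.

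I would also note how to read $\mathrm{Acc}$ and $B_n(\tau)$ if they are meant unconditionally, averaging over the sample. In that reading the identity holds conditionally on the sample but generally not after averaging, because $\mathrm{Acc}_{\mathrm{sup}}$ and $B_n(\tau)$ are correlated through the sample. So I would state the proposition as conditional on the training sample, which matches \eqref{eq:bn}, where $B_n(\tau)$ is itself a random quantity of the sample.

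Finally, the "ceiling" interpretation follows as a corollary, since $\mathrm{Acc}_{\mathrm{sup}}\le 1$:
\[
\mathrm{Acc}\le (1-B_n(\tau))+B_n(\tau)\,\mathrm{Acc}_{\mathrm{blind}}.
\]
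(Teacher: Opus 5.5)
Your proposal is correct and follows the same route as the paper. Both apply the law of total probability over the partition $\{S_\tau, U_\tau\}$ and read off $\mathbb{P}(X\in U_\tau)=B_n(\tau)$ directly from the definition of $B_n(\tau)$. Your extra care makes explicit what the paper's short proof leaves implicit: you condition on the training sample so that $S_\tau$ and $U_\tau$ are fixed sets, and you handle the zero-probability conditioning events when $B_n(\tau)\in\{0,1\}$.
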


\begin{proof}
By the law of total probability,
\begin{align*}
\mathrm{Acc} &= \mathbb{P}(\hat{Y}=Y)\\
&= \mathbb{P}(\hat{Y}=Y\mid X\in S_\tau)\,\mathbb{P}(X\in S_\tau)
 + \mathbb{P}(\hat{Y}=Y\mid X\in U_\tau)\,\mathbb{P}(X\in U_\tau).
\end{align*}
Noting $\mathbb{P}(X\in U_\tau)=B_n(\tau)$ from~\eqref{eq:bn} and $\mathbb{P}(X\in S_\tau)=1-B_n(\tau)$ proves~\eqref{eq:mixture}.
\end{proof}

\section{Validation Domains and Datasets}\label{sec:setup}
We validate blind-spot mass on wearable inertial HAR using two datasets: an in-house dataset collected in collaboration with an industry partner and the open PAMAP2 benchmark.

\subsection{Datasets}
\textbf{In-house dataset (Figs.~\ref{fig:fig1}a--\ref{fig:fig5}a).}
The in-house dataset contains 12 activity classes and $n=1674$ windows.
A compact edge model was evaluated across three wearable form factors.
Each configuration used 55 test windows (5 per class), making per-class accuracy estimates statistically fragile; we therefore report 95\% Wilson confidence intervals in Fig.~\ref{fig:fig2}a.

\textbf{PAMAP2 benchmark (Figs.~\ref{fig:fig1}b--\ref{fig:fig5}b).}
PAMAP2 was collected with three IMUs (hand, chest, ankle) at 100~Hz and includes 18 activities in total \citep{reiss2012iswc,reiss2012abra}.
We process two subjects (101 and 105) from the raw \texttt{.dat} files and discard activityID$=0$ (transients) per the PAMAP2 documentation.
This subset contains 14 labeled activities and $n=1533$ windows using 5~s windows with 2.5~s stride.

\textbf{MIMIC-IV Demo (cross-domain replication; Fig.~\ref{fig:crossdomain}).}
To validate that coverage blindness is not a wearable-specific artifact, we also analyse the MIMIC-IV Clinical Database Demo v2.2 \citep{johnson2023mimiciv}. 
This public demo subset contains 275 hospital admissions.
For each admission, we construct a simple discrete clinical state abstraction using the 4-digit prefix of the primary ICD diagnosis code (sequence number 1 in \texttt{diagnoses\_icd}), yielding a heavy-tailed state distribution suitable for blind-spot analysis.

\subsection{Ethics approval and consent}
The in-house dataset consists of fully anonymized inertial sensor data (accelerometer and gyroscope readings) collected by a commercial partner during wearable device validation.
The dataset contains no personally identifiable information, demographic data, or linking metadata that could be used to identify individual participants.
Participants were assigned random pseudonymous labels for data management purposes, and the data collection entity retained no linking key between pseudonyms and participant identities.
All participants provided informed consent for their anonymized sensor data to be used for research purposes.
All methods were carried out in accordance with relevant data protection regulations and ethical guidelines.
This secondary analysis of fully de-identified sensor data does not constitute human subjects research under 45 CFR 46.102(e)(1) and did not require institutional review board approval.
The PAMAP2 dataset was collected under ethics approval as documented in the original study \citep{reiss2012iswc,reiss2012abra}.

\subsection{Models and evaluation protocol}
The classifier differs between datasets (tiny CNN vs.\ logistic regression) by design: blind-spot mass evaluates \emph{distributional support} and state coverage rather than classifier capacity.
Any competent classifier operating under the same support constraints obeys Proposition~\ref{prop:ceiling}; the figures are intended to expose the coverage bottleneck, not to optimize architecture.

For PAMAP2 accuracy (Fig.~\ref{fig:fig2}b), we use a lightweight multinomial logistic regression classifier with window-level summary features (mean and standard deviation of 3-axis accelerometer and gyroscope plus magnitude statistics; 16 features per sensor placement).
We use a stratified 70/30 train/test split to obtain per-class accuracies and Wilson 95\% confidence intervals.

\subsection{State refinement proxies and computation of blind-spot mass}
All blind-spot results are computed from the definition in~\eqref{eq:bn} using plug-in probabilities $\hat{P}(x)=N_n(x)/n$ at the chosen operational abstraction.
We evaluate how blind-spot mass changes when the state definition is refined from coarse activity-only states to composite states that include measurable proxies for placement/orientation and intensity.

\paragraph{PAMAP2 proxy definitions.}
To operationalize state refinement on PAMAP2, we introduce reproducible proxies computed per 5~s window from the chest IMU.
Let a window contain $L$ samples of chest accelerometer $\mathbf{a}_t\in\mathbb{R}^3$ and gyroscope $\boldsymbol{\omega}_t\in\mathbb{R}^3$.

\textbf{Tilt proxy and tilt-bin index.}
Define the window-mean acceleration vector
\begin{equation}
\boldsymbol{\mu}^{(\mathrm{acc})}=\frac{1}{L}\sum_{t=1}^{L} \mathbf{a}_t,
\qquad
\tilde{\boldsymbol{\mu}}^{(\mathrm{acc})}=\frac{\boldsymbol{\mu}^{(\mathrm{acc})}}{\lVert\boldsymbol{\mu}^{(\mathrm{acc})}\rVert_2}.
\end{equation}
We compute a tilt angle (relative to the sensor $z$-axis)
\begin{equation}
\phi=\arccos\big(|\tilde{\mu}^{(\mathrm{acc})}_z|\big)\in[0,\pi/2],
\end{equation}
and discretize it into $P$ bins (we use $P{=}6$) to obtain the placement/orientation proxy $p$
\begin{equation}
p = \min\Big(P{-}1,\ \big\lfloor P\,\phi/(\pi/2)\big\rfloor\Big)\in\{0,1,\ldots,P{-}1\}.
\label{eq:tilt_bin}
\end{equation}

\textbf{Intensity proxy and energy-bin.}
Define a windowed gyroscope energy
\begin{equation}
E=\frac{1}{L}\sum_{t=1}^{L}\lVert\boldsymbol{\omega}_t\rVert_2^2.
\label{eq:gyro_energy}
\end{equation}
We discretize $E$ into $Q$ bins (we use $Q{=}3$) using dataset quantiles $(q_0,q_{1/3},q_{2/3},q_1)$:
\begin{equation}
e = \begin{cases}
0, & q_0\le E\le q_{1/3},\\
1, & q_{1/3}<E\le q_{2/3},\\
2, & q_{2/3}<E\le q_1.
\end{cases}
\label{eq:energy_bin}
\end{equation}
These proxies are not claims of optimal state markers; they are operational, reproducible surrogates for placement/orientation variation and movement intensity that can be extracted consistently across public IMU datasets.

\paragraph{Operational state refinement on PAMAP2.}
Refining the abstraction from $x=a$ to $x=(a,p)$ and $x=(a,p,e)$ increases the observed number of distinguishable states from $K_{\mathrm{eff}}{=}14$ to $K_{\mathrm{eff}}{=}44$ to $K_{\mathrm{eff}}{=}78$ (subjects 101+105).
As predicted by~\eqref{eq:bn}, this refinement increases $\widehat{B}_n(\tau)$ for fixed $n$ (Fig.~\ref{fig:fig3}b).
\paragraph{Deployment-refined abstraction for cross-domain replication.}
In addition to the coarse-to-moderate refinements above (used in Figs.~\ref{fig:fig1}--\ref{fig:fig5}), we evaluate a more deployment-refined abstraction in which tilt/orientation is discretized at higher resolution and intensity is represented by finer quantization of window energy and mean angular-rate magnitude (Methods; used only for Fig.~\ref{fig:crossdomain}). 
This is intended to emulate the combinatorial granularity encountered in unconstrained field use, where small changes in orientation and motion intensity can define distinct operational regimes.

\section{Empirical Results and Cross-Domain Replication}\label{sec:results}
We analysed two wearable HAR datasets: (a) an in-house dataset collected under controlled but heterogeneous conditions and (b) the open PAMAP2 benchmark (subjects 101 and 105).

\subsection{Activity-level support was imbalanced}
Activity-level window counts varied substantially across activities in both datasets (Fig.~\ref{fig:fig1}).

\begin{figure}[t]
\centering
\subfloat[\textbf{In-house HAR:} activity-level window coverage.]{%
  \includegraphics[width=0.49\linewidth]{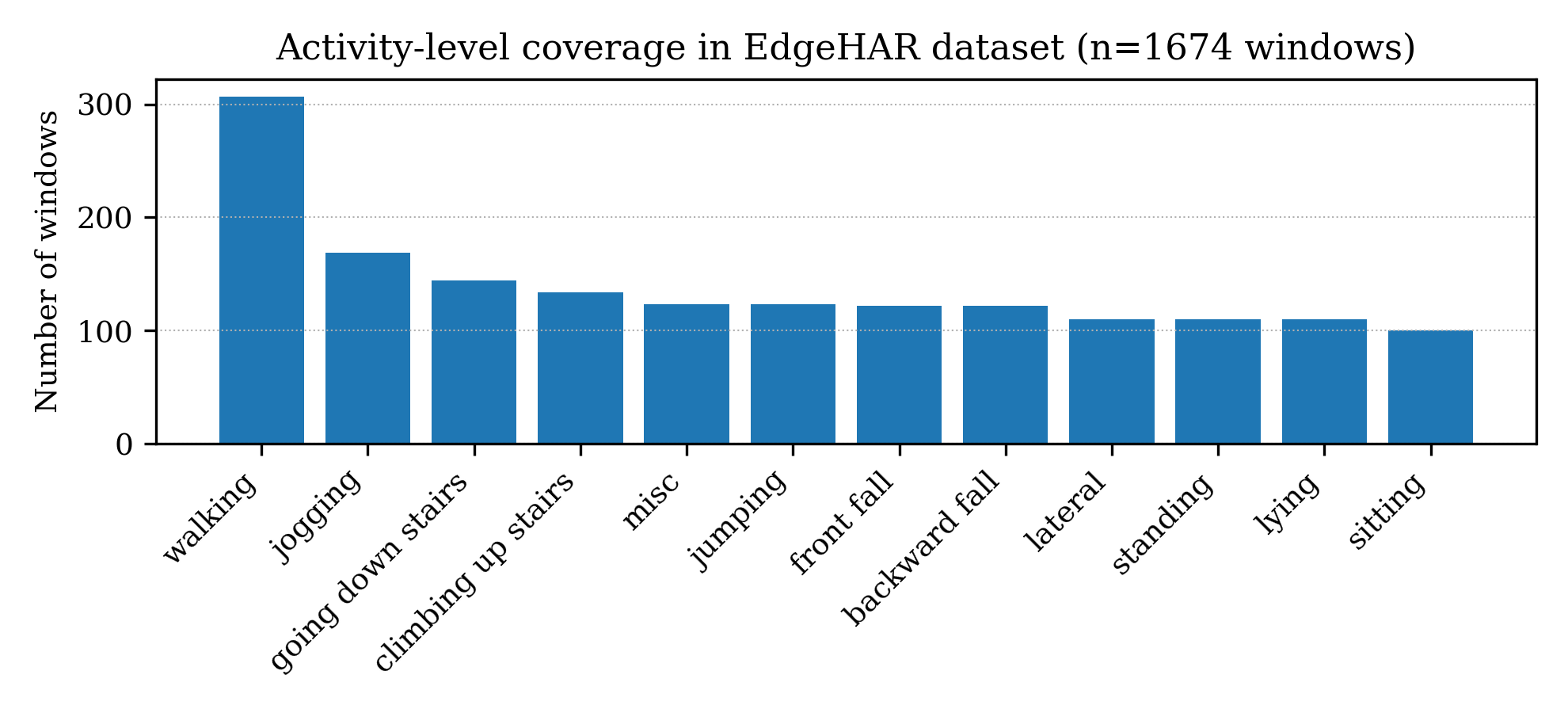}}%
\hfill
\subfloat[\textbf{PAMAP2 (subjects 101+105):} activity-level window coverage.]{%
  \includegraphics[width=0.49\linewidth]{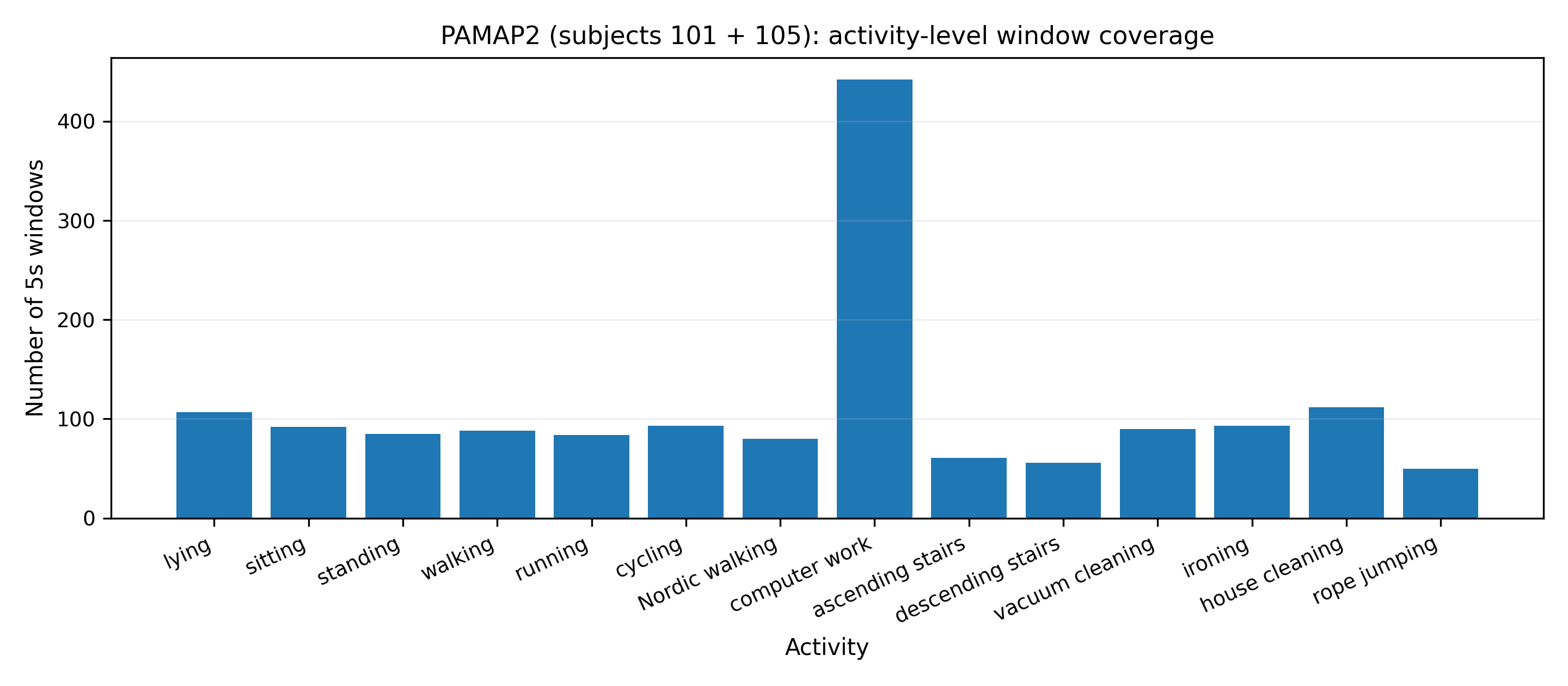}}%
\caption{Activity-level window counts (5\,s windows) in (a) the in-house dataset and (b) the PAMAP2 benchmark (subjects 101+105). PAMAP2 activity mapping follows the dataset documentation~\cite{reiss2012iswc,reiss2012abra}.}
\label{fig:fig1}
\end{figure}

In both datasets, a small number of activities accounted for a large fraction of available windows, while several activities were represented by comparatively few windows (Fig.~\ref{fig:fig1}).

\subsection{Accuracy estimates showed uncertainty and placement dependence}
Per-class accuracy varied across activities and was associated with non-negligible statistical uncertainty (Fig.~\ref{fig:fig2}).

\begin{figure}[t]
\centering
\subfloat[\textbf{In-house HAR:} per-class accuracy across form factors with 95\% CIs (5 tests/class).]{%
  \includegraphics[width=0.49\linewidth]{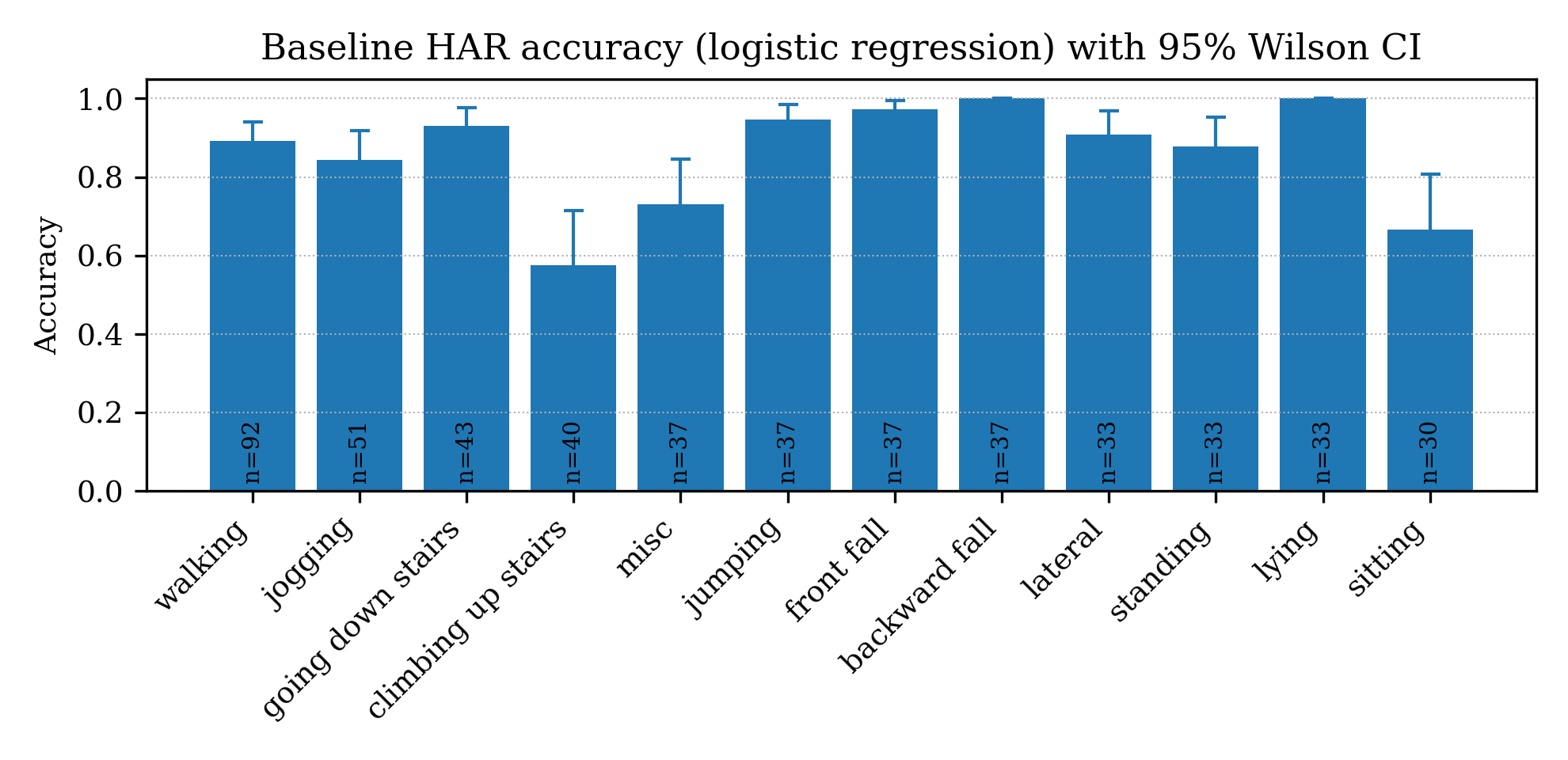}}%
\hfill
\subfloat[\textbf{PAMAP2:} per-class accuracy across placements (hand/chest/ankle) with Wilson 95\% CIs.]{%
  \includegraphics[width=0.49\linewidth]{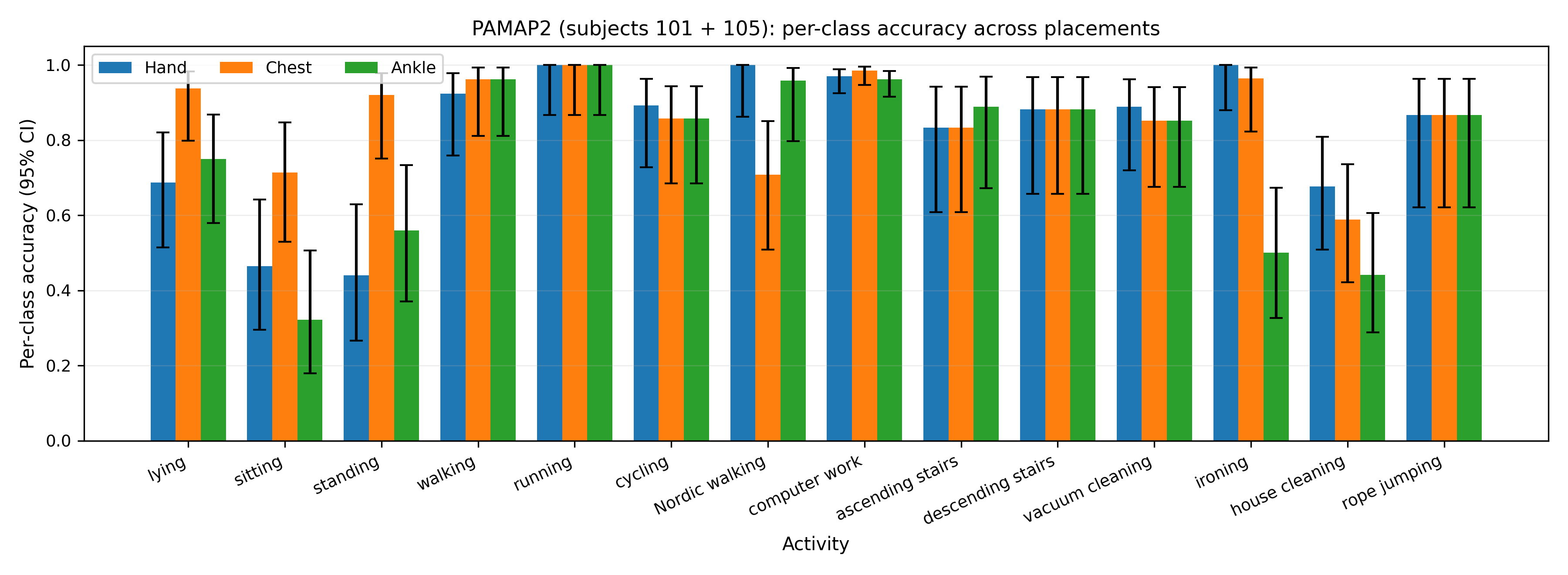}}%
\caption{Per-class accuracy with 95\% confidence intervals. (a) In-house dataset (5 test windows/class). (b) PAMAP2 dataset: Wilson 95\% intervals; per-class test support ranges from 15 to 133 windows (median 26.5) across hand, chest, and ankle placements.}
\label{fig:fig2}
\end{figure}

In the in-house dataset, confidence intervals remained wide for several classes due to the small per-class test support ($N=5$ windows/class). In PAMAP2, accuracy differed across sensor placements (hand, chest, ankle) for multiple activities.

\subsection{Blind-spot mass increased sharply with support threshold}
Blind-spot mass increased sharply as the required per-state support threshold $\tau$ increased (Fig.~\ref{fig:fig3}).

\begin{figure}[t]
\centering
\subfloat[\textbf{In-house HAR:} $\widehat{B}_n(\tau)$ vs $\tau$ for activity-only states.]{%
  \includegraphics[width=0.49\linewidth]{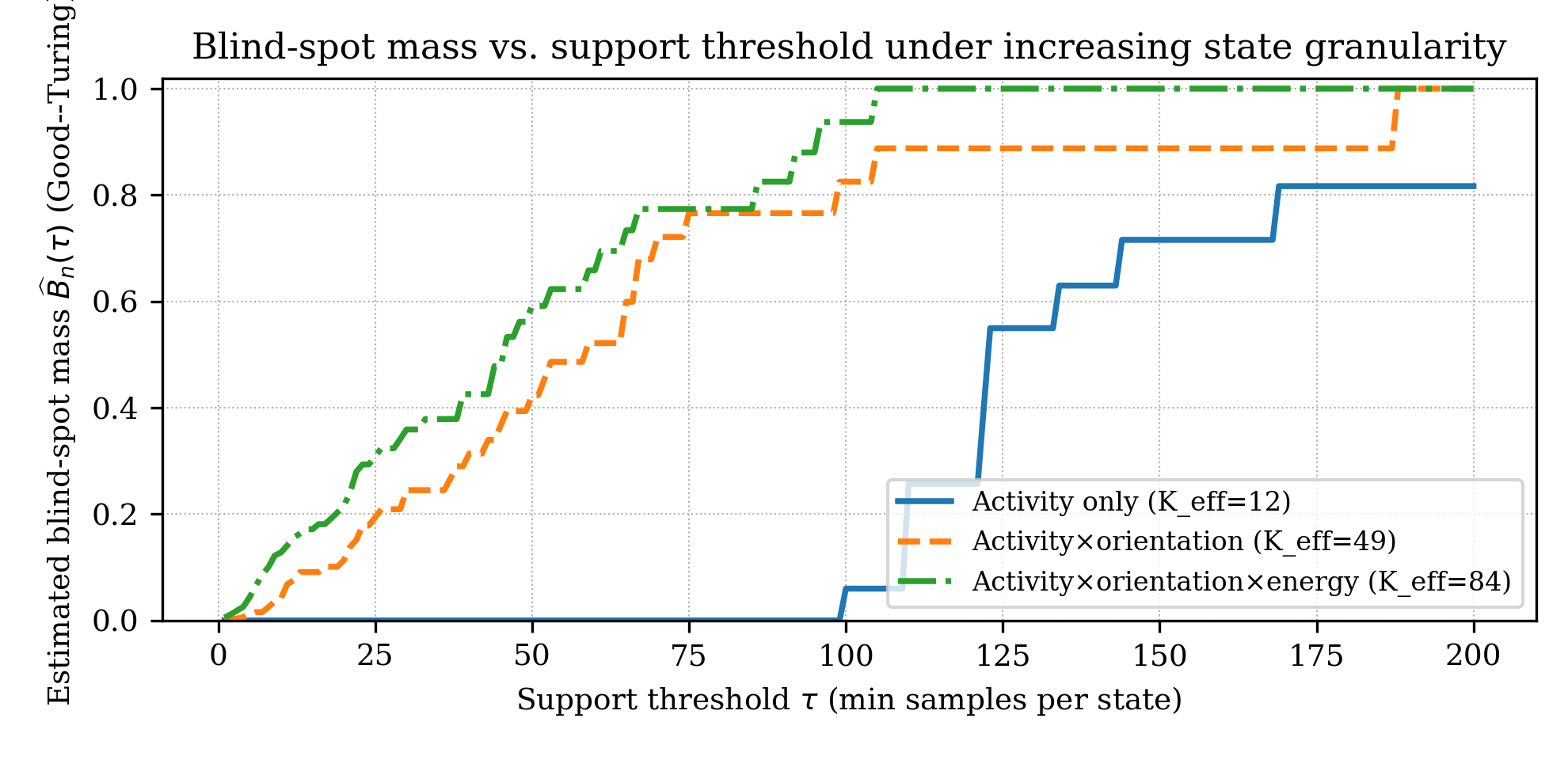}}%
\hfill
\subfloat[\textbf{PAMAP2:} $\widehat{B}_n(\tau)$ vs $\tau$ under refined state abstractions.]{%
  \includegraphics[width=0.49\linewidth]{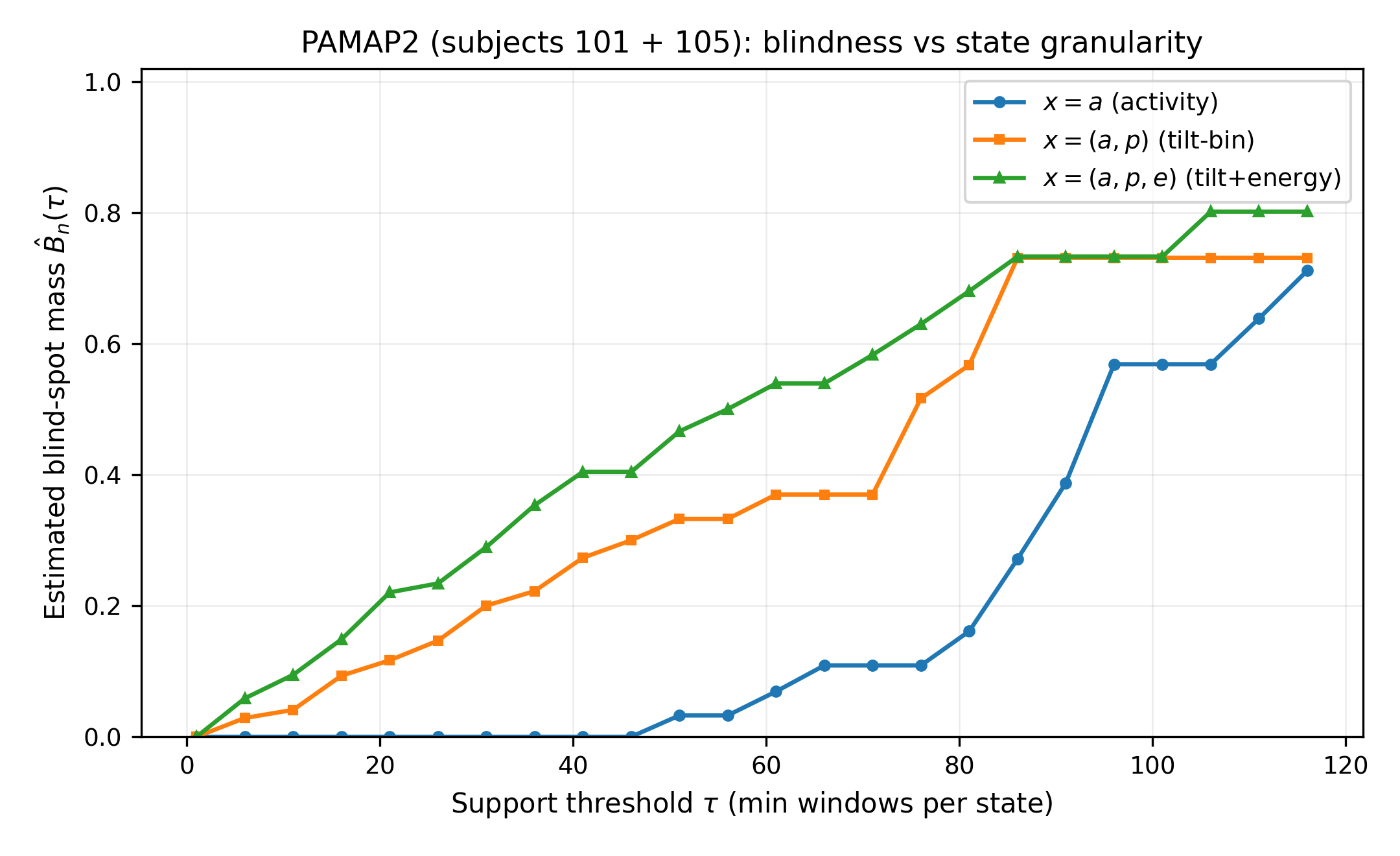}}%
\caption{Estimated blind-spot mass $\widehat{B}_n(\tau)$ versus support threshold $\tau$. (a) In-house dataset under activity-only states $x=a$. (b) PAMAP2 under refined operational abstractions: $x=a$ ($K_{\mathrm{eff}}=14$), $x=(a,p)$ ($K_{\mathrm{eff}}=44$), and $x=(a,p,e)$ ($K_{\mathrm{eff}}=78$).}
\label{fig:fig3}
\end{figure}

For PAMAP2, refined operational abstractions yielded larger blind-spot mass over the same range of $\tau$ compared with activity-only states.

\subsection{Cross-domain replication on MIMIC-IV clinical state abstractions}
We performed a cross-domain validation on the MIMIC-IV Clinical Database Demo v2.2 (275 admissions) by treating each admission as one sample from a discrete clinical state space (Methods). 
The resulting blind-spot mass curve closely matched the deployment-refined HAR curve (Fig.~\ref{fig:crossdomain}). 
Specifically, both domains converged to $\widehat{B}_n(5)\approx 0.95$ (HAR: 0.949; MIMIC-IV: 0.949), indicating that under practical support requirements, the majority of deployment probability mass resides in under-supported states.

\begin{figure}[t]
\centering
\includegraphics[width=0.75\linewidth]{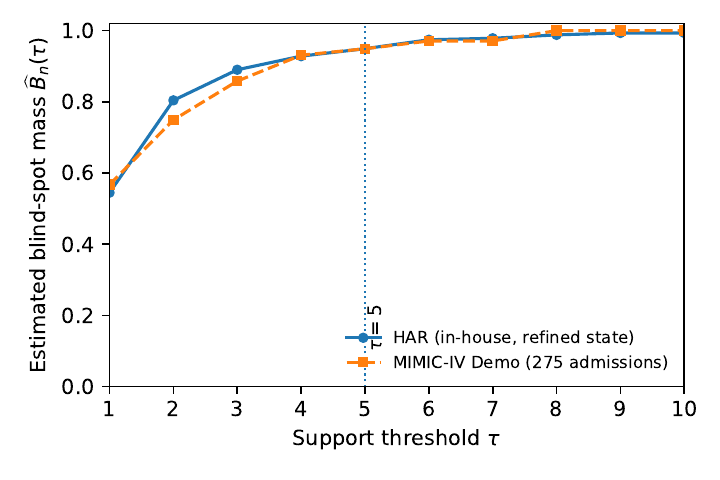}
\caption{Cross-domain replication of blind-spot mass curves. We compare a deployment-refined operational abstraction for in-house HAR windows against an admission-level clinical state abstraction on the MIMIC-IV Demo dataset (275 admissions). Both curves converge to $\widehat{B}_n(5)\approx 0.95$, illustrating that coverage blindness arises from heavy-tailed, combinatorial state spaces across domains.}
\label{fig:crossdomain}
\end{figure}

\subsection{Coverage-imposed accuracy ceiling decreased with support requirements}
The coverage-imposed accuracy ceiling decreased as $\tau$ increased and tracked the growth of blind-spot mass (Fig.~\ref{fig:fig4}).

\begin{figure}[t]
\centering
\subfloat[\textbf{In-house HAR:} accuracy ceiling vs $\tau$ overlaid with $\widehat{B}_n(\tau)$.]{%
  \includegraphics[width=0.49\linewidth]{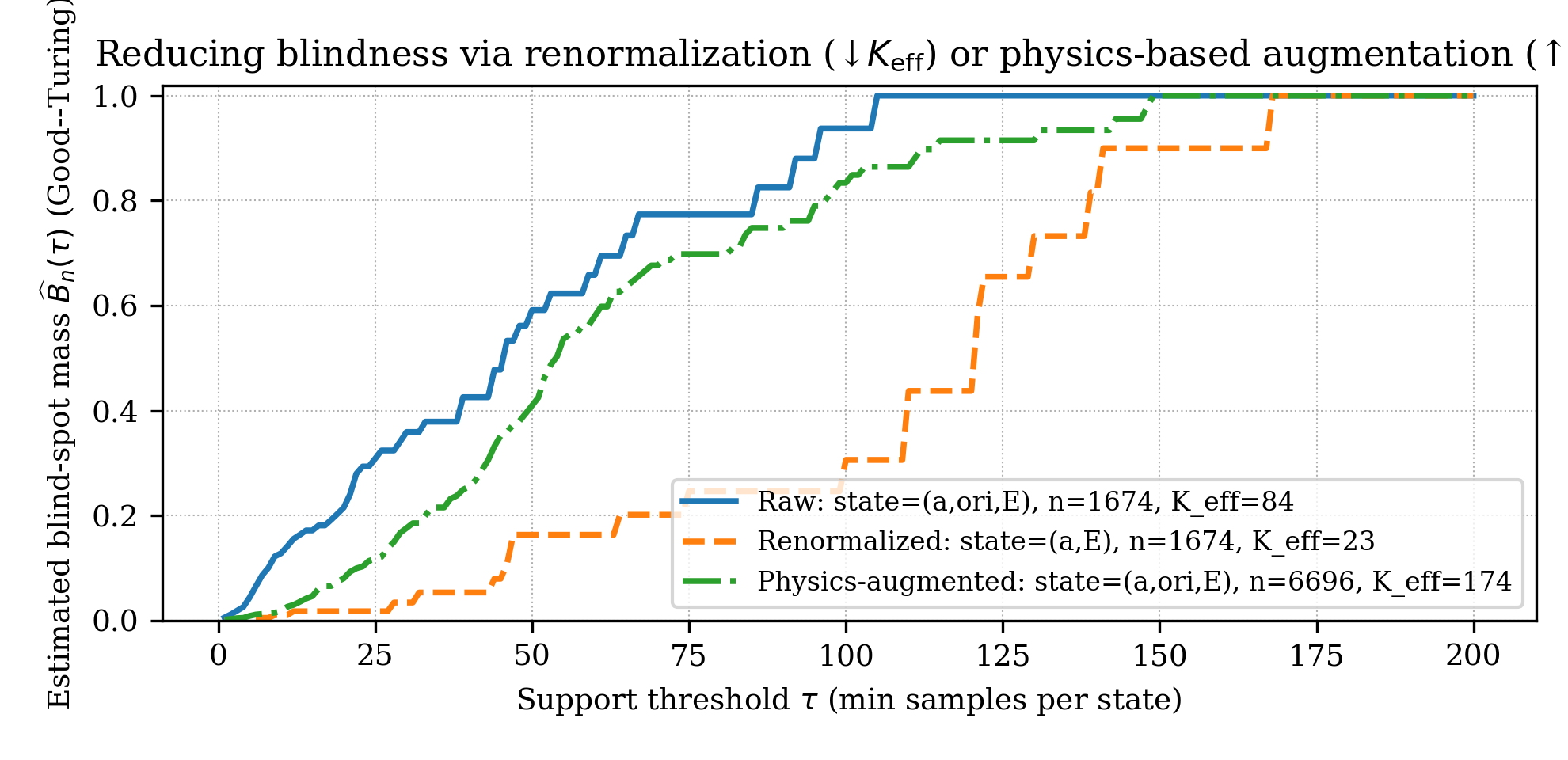}}%
\hfill
\subfloat[\textbf{PAMAP2:} accuracy ceiling vs $\tau$ overlaid with $\widehat{B}_n(\tau)$ under refined abstraction.]{%
  \includegraphics[width=0.49\linewidth]{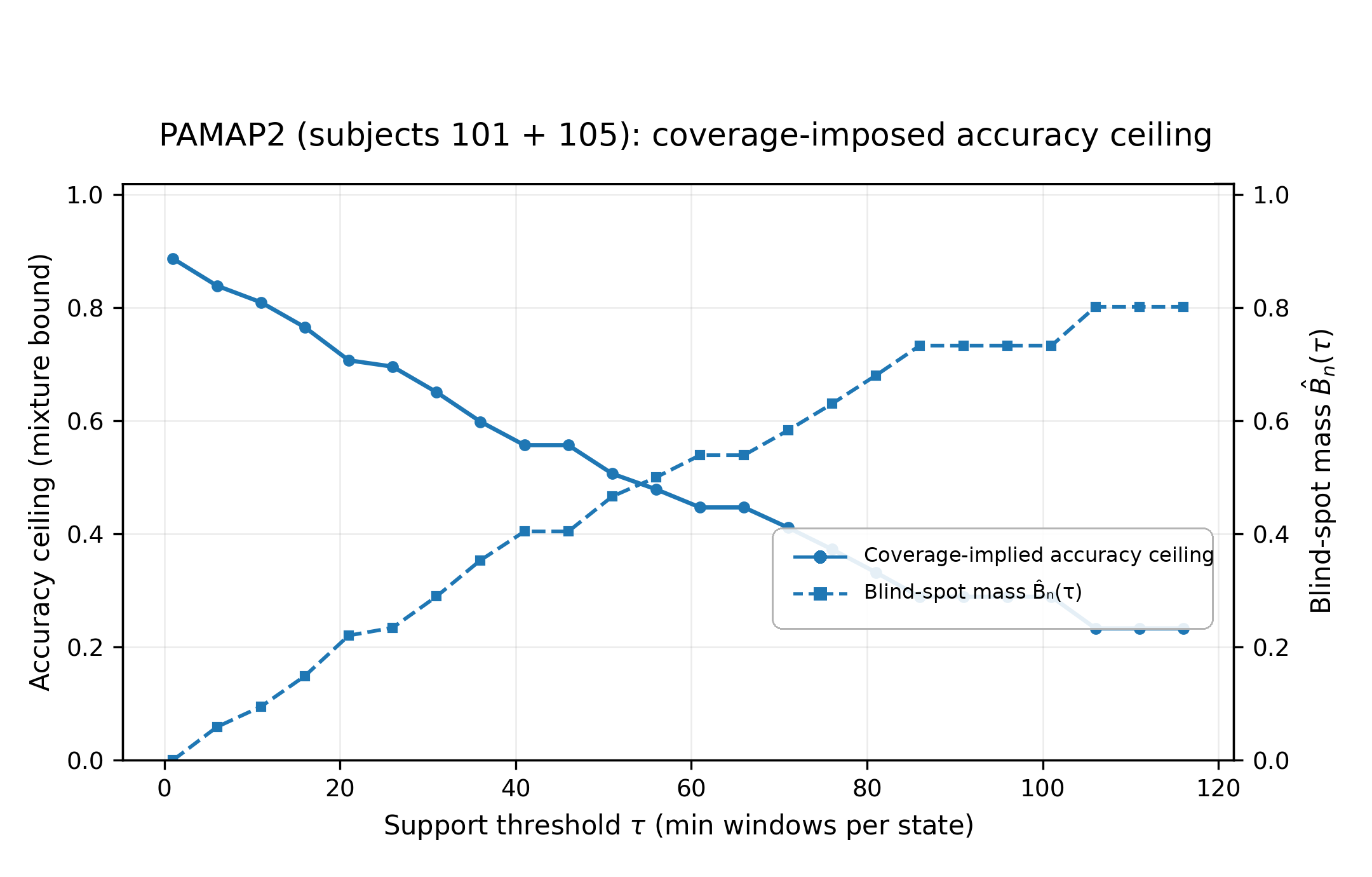}}%
\caption{Coverage-imposed accuracy ceiling versus $\tau$, overlaid with $\widehat{B}_n(\tau)$.}
\label{fig:fig4}
\end{figure}

Across both datasets, the ceiling declined monotonically with $\tau$ and declined more steeply under refined abstractions.

\subsection{Blindness contributions were concentrated in specific regimes}
Blind-spot mass contributions were concentrated among a subset of activities (in-house) or composite regimes (PAMAP2) at fixed $\tau$ (Fig.~\ref{fig:fig5}).

\begin{figure}[t]
\centering
\subfloat[\textbf{In-house HAR:} per-activity contributions to blind mass at $\tau=150$.]{%
  \includegraphics[width=0.49\linewidth]{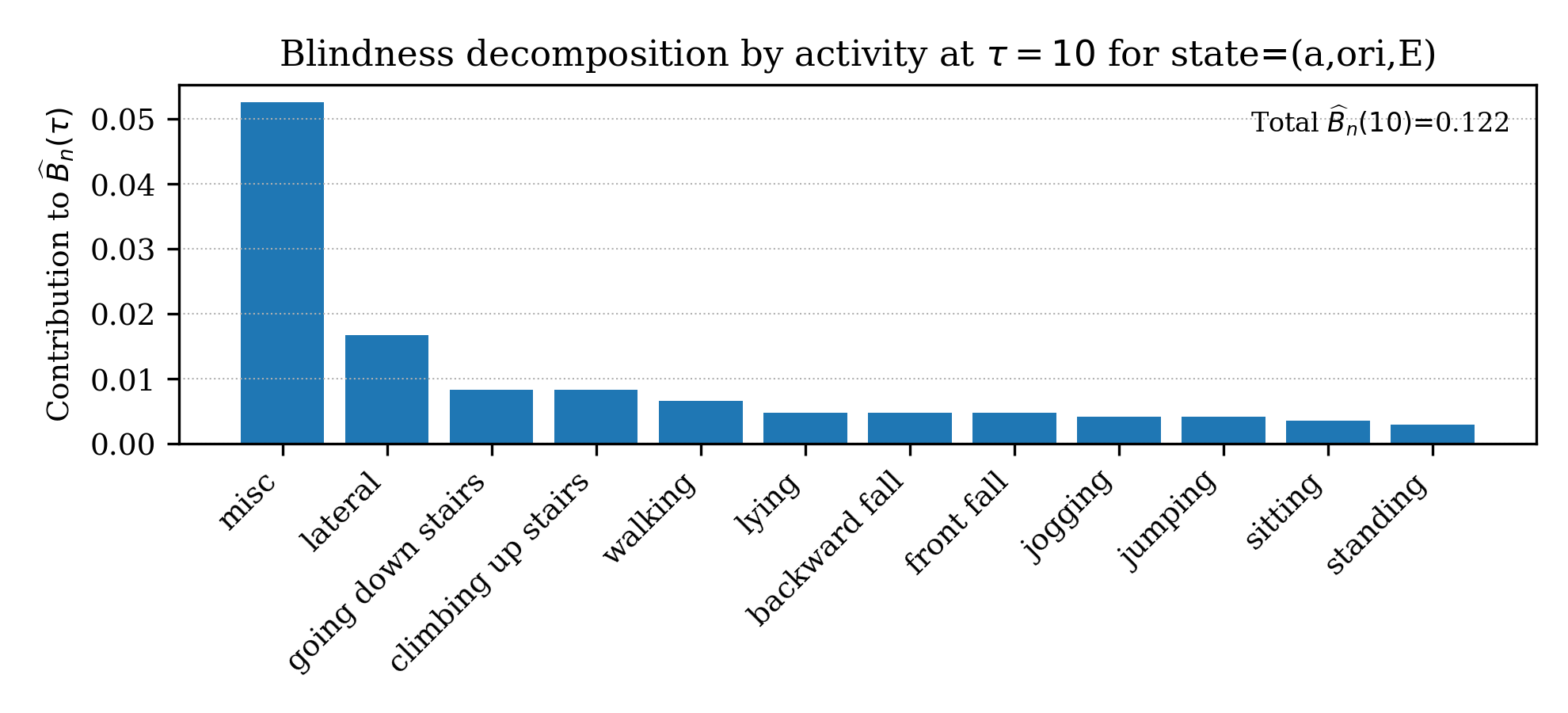}}%
\hfill
\subfloat[\textbf{PAMAP2:} top composite-state contributions at $\tau=20$ (activity, tilt-bin, energy-bin).]{%
  \includegraphics[width=0.49\linewidth]{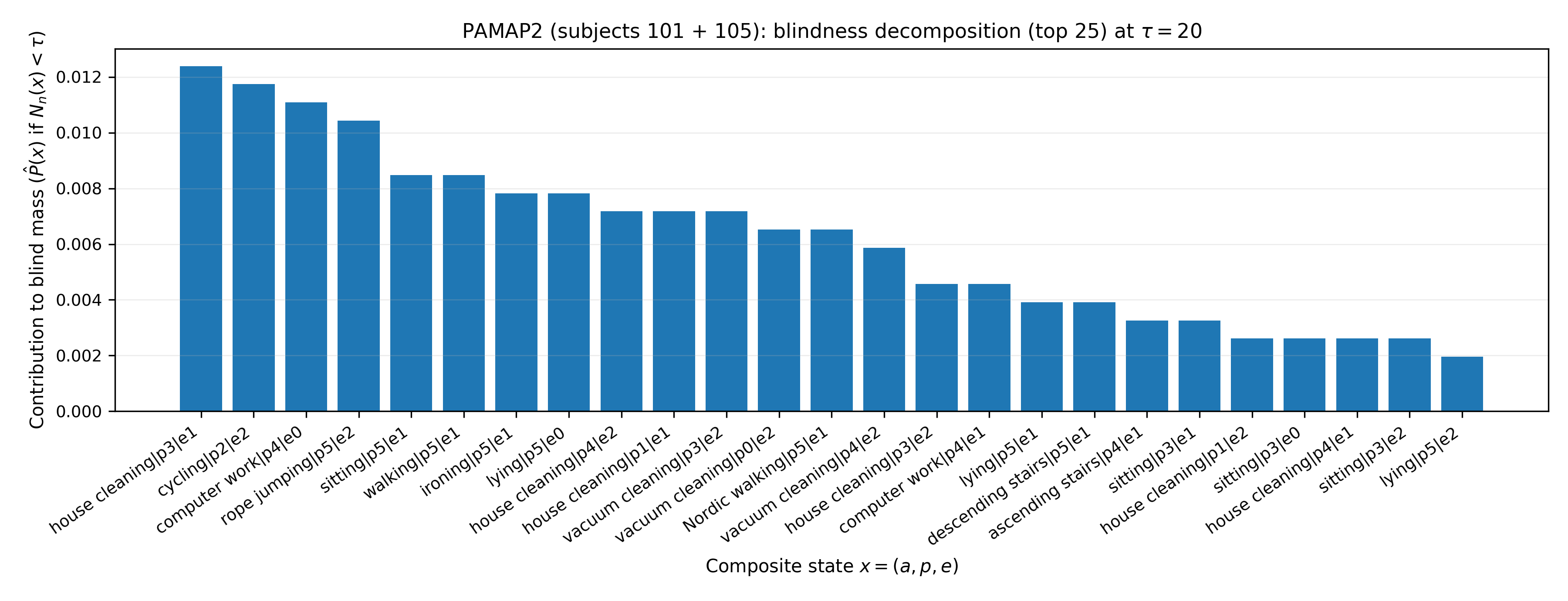}}%
\caption{Decomposition of blind-spot mass at fixed $\tau$. (a) In-house dataset: per-activity contributions at $\tau=150$. (b) PAMAP2: top composite-state contributions at $\tau=20$ for $x=(a,p,e)$ (activity $a$, tilt-bin $p$, energy-bin $e$).}
\label{fig:fig5}
\end{figure}

\begin{table}[t]
\caption{Illustrative risk-weighted blindness example (in-house dataset; $\tau=150$). Risk weights are illustrative and domain-dependent.}
\label{tab:risk_blindness}
\centering
\footnotesize
\setlength{\tabcolsep}{5pt}
\begin{tabular}{lrrrr}
\toprule
State $x$ & $N_n(x)$ & $\hat{P}(x)$ & $w(x)$ & Contribution\\
\midrule
Walking & 307 & 0.183 & 0.2 & 0.000\\
Stairs up & 134 & 0.080 & 0.6 & 0.048\\
Stairs down & 144 & 0.086 & 0.6 & 0.052\\
Front fall & 122 & 0.073 & 1.0 & 0.073\\
Backward fall & 122 & 0.073 & 1.0 & 0.073\\
\bottomrule
\end{tabular}
\end{table}

Table~\ref{tab:risk_blindness} reports an illustrative risk-weighted calculation for the in-house dataset at $\tau=150$ under the weights listed in the table.

\section{Discussion and Deployment Implications}\label{sec:discussion}
The theory and cross-dataset results jointly indicate that, as wearable state definitions approach the true $\Omega$ in~\eqref{eq:wearable_state}, blind mass increases for any fixed $n$. The PAMAP2 results make this explicit: refining the state abstraction from activity-only $x=a$ to $x=(a,p,e)$ (by introducing measurable proxies for placement/orientation and intensity) increases $\widehat{B}_n(\tau)$ for the same dataset size. This is precisely the effective state-space phenomenon in~\eqref{eq:keff}: a richer state description increases $K_{\mathrm{eff}}$, which increases the number of rare states and therefore the blind mass in~\eqref{eq:bn}.

\textbf{Quantifying required normalization and physics-informed constraints.} The blindness curve $\widehat{B}_n(\tau)$ provides a quantitative answer: choose $\tau$ to represent the minimum support required for the desired reliability (e.g., robustness across users/placements), and measure the resulting blind mass. If $\widehat{B}_n(\tau)$ is large (Fig.~\ref{fig:fig3}), then either (i) more data must be collected in the blind region, or (ii) the effective state space must be reduced. Figure~\ref{fig:fig5} then specifies \emph{where} to focus: it decomposes the blind mass into the states (or composite regimes) that dominate the missing probability mass.

\textbf{Impact of physics-informed constraints on state-space reduction.} Purely data-driven collection scales poorly with $|\Omega|$. Renormalization collapses nuisance degrees of freedom so that many raw conditions map to the same effective state, directly reducing $K_{\mathrm{eff}}$. For example, cadence normalization enforces an approximate invariance to gait frequency; under inverted-pendulum gait dynamics, stride acceleration profiles can be rescaled without increasing $K_{\mathrm{eff}}$. Physics-based modeling and constrained generative augmentation further reduce sampling burden by enforcing invariances and physically plausible trajectories, shifting probability mass from blind regions into supported regimes. In edge wearables, this is particularly important because on-device models are intentionally small; reliability must come from structured state representations and coverage-aware dataset design, not solely from larger networks.

The cross-domain replication on MIMIC-IV (Section~5.4) provides further evidence that coverage blindness is not a wearable-specific phenomenon. HAR windows and hospital admissions differ fundamentally in modality, feature space, label space, and application domain, yet both produce heavy-tailed state distributions that converge to $\widehat{B}_n(5) \approx 0.95$ under practical support requirements. This structural similarity suggests that the combinatorial growth of operational state spaces --- and the resulting long-tail under-support at finite $n$ --- is a general property of ML deployment rather than an artifact of sensor data or activity recognition. Practitioners in domains beyond wearables should therefore expect comparable blind-spot mass curves when operational state definitions are refined to reflect true deployment variation.

We formulated a coverage-blindness evaluation methodology for wearable edge AI and validated it on both an in-house dataset and the open PAMAP2 benchmark (Fig.~1--5). The paired results show that high accuracy can coexist with large blind-spot mass; deployable accuracy is bounded by coverage via Proposition~\ref{prop:ceiling}. The blindness measure and its decomposition provide a principled way to decide when data renormalization and physics-based modeling are required, and how to prioritize data collection and modeling effort for reliable field deployment under combinatorial variation.

\section{Conclusion}
Blind-spot mass $B_n(\tau)$ provides a simple, deployment-facing quantification of coverage risk in ML systems operating over large operational state spaces.
By linking under-support to an explicit accuracy decomposition, the framework separates model limitations from data coverage limitations and supplies actionable diagnostics through state refinement and blindness decomposition.
Our wearable HAR validation on an in-house dataset and PAMAP2 illustrates how coverage risk can grow sharply as operational state definitions become more realistic, motivating coverage-aware data collection, renormalization, and physics-informed constraints for reliable edge deployment.
Future work will extend blind-spot mass evaluation to additional application domains and to adaptive deployment monitoring pipelines.
Cross-domain replication on clinical admission data further demonstrates that coverage blindness is a structural property of finite ML deployment, not an artifact of any single application domain.

\section{Data and Code Availability}
The PAMAP2 Physical Activity Monitoring dataset is publicly available via the UCI Machine Learning Repository (\url{https://archive.ics.uci.edu/dataset/231}). The in-house wearable inertial sensor dataset generated during this study has been deposited in Zenodo under open access and is freely available at \url{https://doi.org/10.5281/zenodo.18480659} under a Creative Commons Attribution 4.0 International License.

All analysis scripts used to compute the coverage blindness metrics and generate the figures in this manuscript
are provided with this submission as Supplementary Software and will be released under an open-source license upon publication.

\acks{Biplab Pal acknowledges Prof.\ Aryaa Gangopadhyay, Director of CARDS at UMBC, for guidance and support.
Santanu Bhattacharya acknowledges Prof.\ Ramesh Raskar and Ayush Chopra of the MIT Media Lab for helpful suggestions toward this work.
\textbf{Funding:} This study received no specific external funding.
\textbf{Competing interests:} The authors declare no competing interests.}

\end{document}